\documentclass[11pt]{article}
\usepackage[final]{acl}
\usepackage{times}
\usepackage{latexsym}
\usepackage{float}
\usepackage{booktabs}
\usepackage{tcolorbox}
\usepackage{multirow,makecell}
\usepackage{subcaption}
\usepackage{caption}
\usepackage[T1]{fontenc}
\usepackage{xcolor}
\usepackage[utf8]{inputenc}
\usepackage{microtype}
\usepackage{inconsolata}
\usepackage[linesnumbered,ruled,vlined]{algorithm2e}

\usepackage{graphicx}
\usepackage{amsmath}
\usepackage{amssymb}

\usepackage{xcolor}

\newtheorem{theorem}{Theorem}
\newtheorem{lemma}[theorem]{Lemma}

\newtcolorbox{promptbox}[1][]{
  colback=gray!10, colframe=gray!40, arc=5pt, boxrule=1pt,
  fontupper=\small\ttfamily, title=#1
}

\usepackage{xspace}
\newcommand{\name}{TrustPropRAG\xspace}

\title{Feedback-Assisted Trust Propagation over Document Relation Graphs for Retrieval-Augmented Generation}

\author{Zhuoheng Li, Ying Chen \\
College of Information Sciences and Technology, The Pennsylvania State University\\
\texttt{\{zml5515,yingchen\}@psu.edu}
}

\begin{document}
\maketitle
\begin{abstract}
Retrieval-augmented generation (RAG) systems rely on external corpora that may contain outdated, contradictory, noisy, or unreliable documents, introducing reliability risks. Prior work has leveraged document relations to improve the answer reliability of RAG. To propagate reliability signals beyond directly compared document pairs, we propose \mbox{TrustPropRAG}, which structures document relations as a graph and estimates document reliability through multi-hop propagation across the graph.
TrustPropRAG anchors this propagation with a limited set of human feedback on document
reliability, extending these costly-to-collect feedback-based reliability signals across the whole corpus.
Specifically, based on the constructed document relation graph, \mbox{TrustPropRAG} estimates a trust score for each document by formulating and solving an optimization problem that jointly captures pairwise document relations and user feedback. These scores are then used to improve the selection of reliable documents and support trust-aware answer generation. Evaluation results show that TrustPropRAG
improves both retrieval quality and exact match over baselines, and remains robust under sparse and noisy feedback.

\end{abstract}

\section{Introduction}
\label{sec:intro}
Retrieval-augmented generation (RAG) has emerged as an effective approach for mitigating the limitations of large language models (LLMs), whose parametric knowledge may be incomplete or outdated~\cite{lewis2020rag,borgeaud2022retro,izacard2022atlas}. By retrieving knowledge from external corpora, LLMs can incorporate
relevant and up-to-date information during answer generation. However, RAG's reliance on external corpora can introduce reliability risks,
as the corpora may contain noisy, outdated, or contaminated content~\citep{zhong2023poisoning,zou2024poisonedrag}, as well as knowledge conflicts~\citep{xie2024contradictions,chen2022rich}.

Prior work aims
to improve RAG reliability in both the retrieval~\cite{ma2023query} and answer generation~\cite{wang2025astuterag} phases.
Notably, a line of work
leverages relationships among documents to improve the answer reliability of RAG, such as aggregating answers from different documents through majority agreement~\cite{xiang2024certifiably}, or selecting a subset of 
documents that are mutually consistent before answer generation~\cite{shen2025reliabilityrag}. Document relations can be inferred by using natural language inference (NLI) models or leveraging source metadata.

We propose \name{}, a framework that structures document relations as a graph, as graph representations enable information to propagate through \emph{multi-hop relations}. 
Recent studies have explored graph-enhanced RAG, but they mainly focus on organizing knowledge graphs~\cite{edge2024local}, or constructing index graphs for retrieval~\cite{guo2024lightrag}. In contrast, we use the graph structure to estimate document reliability and propagate these estimates across the corpus through multi-hop relations.
Graph propagation, however, requires anchoring, as document relations reveal consistency or conflict, but not document reliability itself. We anchor this propagation with a set of human feedback on document reliability.
In practice, explicit signals  (e.g., thumbs-up or thumbs-down ratings) and implicit signals (e.g., clicks or dwell time) can be attributed to the documents that contribute to the generated answer using traceback methods~\cite{wang2025tracllm,cohenwang2024contextcite}, thereby providing positive or negative signals for those documents.
Since user feedback is costly to collect, TrustPropRAG leverages document relation graphs to propagate sparse feedback across the corpus, similar to how label propagation methods in semi-supervised learning~\citep{zhu2003semisup,zhou2003lgc} spread a few labeled examples across many unlabeled ones.
\name then turns the propagated document reliability scores into performance improvements in retrieval and answer generation.

To estimate document reliability and propagate estimates across the corpus,
\name first constructs a document relation graph, where nodes represent documents and edges capture document relations (e.g., shared source, mutual support, or contradiction). Based on this graph, TrustPropRAG estimates a trust score for each document by formulating and solving an optimization problem. The problem includes two objective function terms to jointly capture pairwise document relations and 
incorporate user feedback.
At query time, these trust scores are used for trust-aware rescoring, which helps select more reliable documents, and for trust-aware prompting, where the LLM is informed of document reliability during answer generation. Trust score optimization is performed offline over the corpus, amortizing its computational cost across future queries.

Our contributions are threefold.
First, we propose a framework that estimates document trust scores by propagating human feedback over a document relation graph, and uses these scores to improve both retrieval and answer generation. Second, we formulate and solve an optimization problem for trust score estimation. We also provide a theoretical analysis of how feedback influences trust score propagation, as well as the convergence rate of the propagation. 
Third, we evaluate \name{} with three open-domain question-answering (QA) benchmarks, three retrievers, and six LLMs, and further test its robustness under sparse and noisy feedback.
Our code is publicly available at~\url{https://github.com/zhliOvO/TrustPropRAG}.

\section{Related Work}
\label{sec:related}

\noindent \textbf{RAG reliability.} 
A growing body of work has studied how to improve RAG reliability. For example, InstructRAG guides LLMs to denoise retrieved contexts using rationales~\cite{wei2024instructrag}. AstuteRAG compares retrieved evidence with the LLM’s parametric knowledge to handle knowledge conflicts~\citep{wang2025astuterag}. A notable line of work uses relationships (agreements or contradictions) among documents to improve reliability. For instance, RobustRAG isolates retrieved documents, generates per-document answers independently, and aggregates them through majority agreement~\cite{xiang2024certifiably}. TrustRAG leverages similarity relations among retrieved documents by clustering them and then filtering out contaminated documents before generation~\cite{zhou2025trustrag}. 
ReliabilityRAG detects contradictions between documents, selects a subset of documents that are mutually consistent, and produces the final answer via keyword aggregation~\citep{shen2025reliabilityrag}.
 However, these relationships are considered within a subset of documents, and reliability is not propagated beyond the directly compared documents. To move beyond this one-hop relation, TrustPropRAG structures document relations as a graph and captures multi-hop relations among documents through the graph representation.  
 
\noindent \textbf{Graph-empowered RAG.}
Graphs have long been used to model relationships among textual evidence in tasks such as multi-document summarization~\citep{christensen2013towards} and fact verification~\citep{zhong2020reasoning}.
Recent work has also incorporated graph structures into RAG. 
For example, GraphRAG organizes corpora with entity knowledge graphs, and leverages community summaries for all groups of closely related entities~\cite{edge2024local}.
LightRAG incorporates graph structures into text indexing and
relevant information retrieval for efficient RAG 
\citep{guo2024lightrag}. NodeRAG introduces heterogeneous graph structures to better leverage the structural nature of
graphs~\cite{xu2025noderag}.
However, these methods primarily use graphs to organize knowledge and guide retrieval. In contrast, we use graph structure to propagate document reliability estimates across the corpus that may contain noisy, outdated,  or contradictory information.

\noindent \textbf{Human feedback for enhancing LLMs.}
Human feedback has been used to align LLM systems with user preferences or intents. For example, InstructGPT trains language models with human feedback using reinforcement
learning~\citep{ouyang2022instructgpt}. RAG-Reward uses preference-based reward models to improve the quality of generated answers~\citep{zhang2025ragreward}. Pistis-RAG aligns human feedback by training a ranking model to improve content ranking and retrieval mechanisms~\citep{bai2024pistis}.
Some methods further study feedback in interactive settings. For example, \citet{liu2025user} leverages implicit feedback in human--LLM dialogues.
\citet{zhang2025leveraging} leverages engagement and disengagement signals for improving
LLM-based recommenders.
These works mainly use feedback to fine-tune model behavior or improve outputs in human-LLM interactions. \name instead uses limited feedback to estimate document reliability across the large-scale corpus without model fine-tuning.

\section{Methodology}
\label{sec:method}
Our objective is to enhance RAG by selecting and using more reliable documents from an external corpus for answer generation. The external document corpus may contain unreliable documents, such as those with misinformation, outdated information, or conflicting claims~\citep{xu2024knowledge, zou2024poisonedrag}. User feedback can indicate whether certain documents are reliable or unreliable. 
For example, correct and incorrect RAG-generated answers, reported by users or flagged by fact-verification systems, can be traced back to their contributing documents (e.g., using traceback methods~\cite{wang2025tracllm,cohenwang2024contextcite}), 
providing positive or negative feedback for the corresponding documents.
However, user feedback can cover only a small fraction of a large external corpus. To address this, \name{} takes advantage of document relations to propagate sparse feedback across the corpus.

As shown in Figure~\ref{fig:overview}, \name{} consists of four main stages.
First, we construct a graph where each node represents a document, and each edge encodes a relation between two documents, such as a same-source relation, mutual support, or contradiction (Section~\ref{sec:edge_labeling}).
Second, based on the document relation graph, we formulate and solve \emph{optimization problems for estimating trust scores} (Section~\ref{sec:trust_opt}). We estimate a trust score $\tau_i \in [0, 1]$ for each document $d_i$ by jointly optimizing these scores over all documents in the corpus $\mathcal{C} = \{d_1, d_2, \ldots, d_N\}$ with $N$ documents. Third, we perform \emph{trust-aware rescoring} (Section \ref{sec:reranking}).  Given a user query $q$, the objective is to produce a ranked list $\mathcal{R}_k(q)$ of $k$ documents that are provided to the LLM for answer generation, favoring query-relevant documents with higher trust scores. Finally, we perform trust-aware prompting (Section~\ref{sec:generation}) by leveraging document trust scores during answer generation.

TrustPropRAG separates offline trust estimation from query-time answer generation. Graph construction and trust score optimization are performed offline over the corpus, amortizing their computational cost across future queries, while trust-aware rescoring and prompting are performed for each query with limited additional overhead.

\begin{figure*}[t]
\centering
\includegraphics[width=2\columnwidth]{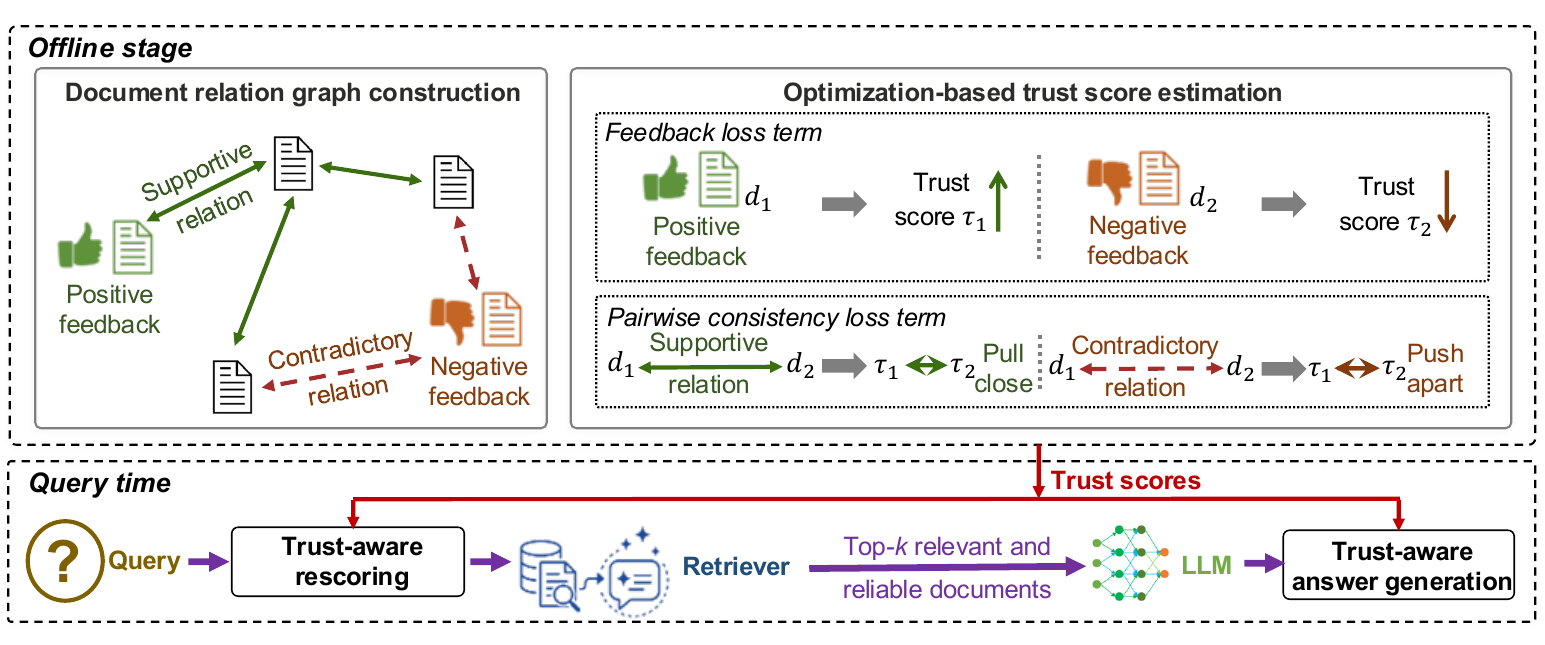}
\vspace{-0.3cm}
\caption{Overview of the \name pipeline.}
\vspace{-0.4cm}

\label{fig:overview}
\end{figure*}

\subsection{Document Relation Graph Construction}
\label{sec:edge_labeling}

Given the corpus $\mathcal{C}$, we construct a document relation graph $G=(\mathcal{C},E)$, where each node represents a document, and each edge $e_{ij}$ encodes a 
relation between documents $d_i$ and $d_j$. Each edge $e_{ij}$ is assigned 
a relation label $\ell_{ij}$. Specifically, $\ell_{ij}=+1$ means that the two documents are expected to receive similar trust scores, $\ell_{ij}=-1$ means that they contain conflicting claims and should not both be highly trusted, and $\ell_{ij}=0$ means that no reliable relation can be identified. Each edge also has a confidence weight
$w_{ij}\in[0,1]$, which reflects the confidence associated with the relation label $\ell_{ij}$.

The relation label and confidence weight can be obtained in different ways. For example, we can use source metadata to connect two documents $d_i$ and $d_j$ from the same source group with supportive edges ($\ell_{ij}=+1$),  because they are expected to have correlated trustworthiness and thus similar trust scores.
Document relations can also be inferred using an NLI model, which derives whether documents $d_i$ and $d_j$ are supportive ($\ell_{ij}=+1$), contradictory ($\ell_{ij}=-1$), or neutral ($\ell_{ij}=0$). For edges derived by NLI, the confidence weight $w_{ij}$ is set to the softmax probability associated with the relation label.

\subsection{Trust Score Optimization}
\label{sec:trust_opt}

\noindent\textbf{Pairwise consistency loss.}  
We encourage two documents with positive relation labels to have similar trust scores, while enforcing that contradictory documents have diverging trust scores. Specifically, if $d_i$ and $d_j$ are mutually supportive ($\ell_{ij} = +1$), their trust scores $\tau_i$ and $\tau_j$ should be close to each other, i.e., we aim to minimize $(\tau_i - \tau_j)^2$; whereas if $\ell_{ij} = -1$, meaning that $d_i$ and $d_j$ are contradictory, their trust scores should be far apart so that at most one document can be trusted, i.e., we aim to minimize $(\tau_i + \tau_j -1)^2$. 
Hence, we define the pairwise consistency loss $\mathcal{L}_\text{pair}$:
\begin{equation*}
\mathcal{L}_{\mathrm{pair}} = \!\!\!\sum_{\substack{e_{ij} \in E \\ \ell_{ij}=+1}} \!\!\! w_{ij} (\tau_i - \tau_j)^2 \;+\!\!\! \sum_{\substack{e_{ij} \in E \\ \ell_{ij}=-1}} \!\!\! w_{ij} (\tau_i + \tau_j - 1)^2.
\end{equation*}

\noindent\textbf{Feedback loss.}
Over time, we maintain a feedback set $\mathcal{F} = \mathcal{F}^{+} \cup \mathcal{F}^{-}$: $\mathcal{F}^{+}$ for documents marked as reliable through user feedback, and $\mathcal{F}^{-}$ for documents marked as unreliable.
We incorporate this feedback into the optimization problem of trust score estimation. Nodes in $\mathcal{F}^+$ are encouraged to have high trust scores, e.g., by minimizing $(\tau_i - \tau^+)^2$, where $\tau^+$ is close to $1$. In contrast, nodes in $\mathcal{F}^-$ are pushed toward lower trust scores, e.g., by minimizing $(\tau_i - \tau^-)^2$, where $\tau^-$ is close to $0$, to limit their influence on answer generation. We implement this by adding a feedback term $\mathcal{L}_{\mathrm{fb}}$ to the optimization problem of trust score estimation. $\mathcal{L}_{\mathrm{fb}}$ is formulated as
\begin{equation*}
\mathcal{L}_{\mathrm{fb}} = \sum_{d_i \in \mathcal{F}} (\tau_i - y_i)^2, \quad
y_i =
\begin{cases}
\tau^+, & \text{if } d_i \in \mathcal{F}^+\\
\tau^-, & \text{if } d_i \in \mathcal{F}^-
\end{cases},
\end{equation*}
where $
y_i$ is the feedback value.

\noindent\textbf{Formulating trust score optimization problem.}  To capture pairwise consistency of documents and incorporate user feedback, we jointly optimize 
objective function terms $\mathcal{L}_{\mathrm{pair}}$ and $\mathcal{L}_{\mathrm{fb}}$.  We formulate the optimization problem to assign trust scores to all documents in the corpus:
\begin{equation}
\boldsymbol{\tau}^* \;=\; \arg\min_{\boldsymbol{\tau} \in [0,1]^N}
\;\; \mathcal{L}_{\mathrm{pair}}
\;+\; \lambda_{f}\,\mathcal{L}_{\mathrm{fb}},
\label{eq:opt_problem}
\end{equation}
where $\boldsymbol{\tau}=\{\tau_1,\tau_2,\cdots,\tau_N\}$ is the set of trust scores for all documents, and $\lambda_f$ controls the trade-off between the two terms.

\noindent\textbf{Solving trust score optimization problem.} Eq.~\eqref{eq:opt_problem} is a quadratic programming (QP) problem. We solve the problem with projected gradient descent (PGD). The document relation graph can contain many nodes but remains sparse (i.e., each node has supportive or contradictory relation labels with only a small number of other nodes). As a result, each PGD step only needs to compute the loss and gradients for document pairs connected by these relations, rather than all possible document pairs. Section~\ref{sec:main_results} reports the measured cost of each stage in \name.

Before running PGD, we initialize each document’s trust score using its relations with other documents in the document relation graph. Specifically, for each document $d_i$, we compute an initial score by aggregating the signed weights of its connected edges:
$
\mathrm{init}(d_i) = \sum_{(i,j) \in E} \ell_{ij} \cdot w_{ij}
$.
Values are min-max normalized to $[\tau_\text{min}, \tau_\text{max}]$, with $0<\tau_\text{min}<0.5<\tau_\text{max}<1$. Intuitively, documents connected to others through stronger supportive relations receive higher initial trust scores, while documents involved in stronger contradictory relations receive lower initial trust scores. Graph construction and trust score optimization are performed \emph{offline} over the corpus, amortizing their computational cost across future queries. 

\noindent\textbf{Theoretical analysis.} 
Trust scores exhibit \emph{multi-hop propagation}. Feedback on a document shifts its trust score; through pairwise document relations, this effect then spreads to its neighbors, which further propagate the influence to their neighbors.
We formalize this behavior in two lemmas below. 
Since $\mathcal{L}=\mathcal{L}_{\mathrm{pair}}
\;+\; \lambda_{f}\,\mathcal{L}_{\mathrm{fb}}$ is quadratic in $\boldsymbol{\tau}$, its Hessian
$A = \nabla^2 \mathcal{L}$ is symmetric positive semidefinite.
We assume that every connected component of $G$ contains at least one document with feedback, which is sufficient
to make $A$ strictly positive definite (see proof in Appendix~\ref{app:proofs}). Let $\sigma_{\min} \leq \cdots \leq \sigma_{\max}$ denote its eigenvalues
and $\kappa = \sigma_{\max}/\sigma_{\min}$ its condition number.
The unconstrained minimizer satisfies the linear system
$A\boldsymbol{\tau}^* = \mathbf{b}$. We use this unconstrained solution to analyze trust propagation and convergence rate.

\begin{lemma}[Trust propagation]
\label{thm:localization} Let $f_1,\ldots,f_m$ be the feedback documents in $\mathcal{F}$. 
Suppose the feedback value $y_k$ at each $f_k \in \mathcal{F}$ is
perturbed with a change of $\Delta y_{f_k}$.
Then the resulting change in the trust score of document $d_i$ satisfies
\begin{equation*}
\label{eq:superposition}
\lvert\Delta\tau_i\rvert \leq \sum_{k=1}^{m} \gamma_k \cdot q^{\,d_G(i,\, f_k)},
\qquad q = \frac{\sqrt{\kappa}-1}{\sqrt{\kappa}+1},
\end{equation*}
where $\gamma_k = 2\lambda_f \gamma |\Delta y_{f_k}|$, $\gamma=(1+\sqrt{\kappa})^2/(2\sigma_{\max})=O(1/\sigma_{\min})$, and $d_G(i,j)$ is the shortest-path distance between nodes $i$ and $j$ in $G$. \end{lemma}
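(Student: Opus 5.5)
The plan is to use the exponential decay of inverses of banded (sparse) positive definite matrices, in the style of Demko--Moss--Smith. We work with the unconstrained system $A\boldsymbol{\tau}^*=\mathbf{b}$.

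First, compute $A$ and $\mathbf{b}$ explicitly. For a supportive edge, the term $w_{ij}(\tau_i-\tau_j)^2$ contributes $2w_{ij}$ to $A_{ii}$ and $A_{jj}$ and $-2w_{ij}$ to $A_{ij}$. For a contradictory edge, $w_{ij}(\tau_i+\tau_j-1)^2$ contributes $2w_{ij}$ to $A_{ii}$ and $A_{jj}$, $+2w_{ij}$ to $A_{ij}$, and $2w_{ij}$ to $b_i$ and $b_j$. The feedback term adds $2\lambda_f$ to $A_{f_kf_k}$ and $2\lambda_f y_{f_k}$ to $b_{f_k}$. Consequently $A_{ij}\neq 0$ with $i\neq j$ only if $e_{ij}\in E$, and the only dependence of $\mathbf{b}$ on the feedback values is through $b_{f_k}$.

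A perturbation $\Delta y_{f_k}$ therefore changes $\mathbf{b}$ by $\Delta\mathbf{b}=\sum_k 2\lambda_f\Delta y_{f_k}\mathbf{e}_{f_k}$, while $A$ is unchanged. By linearity, $\Delta\boldsymbol{\tau}=A^{-1}\Delta\mathbf{b}$, so
\[
\lvert\Delta\tau_i\rvert\le\sum_k 2\lambda_f\lvert\Delta y_{f_k}\rvert\,\lvert(A^{-1})_{i f_k}\rvert .
\]
It then suffices to show $\lvert(A^{-1})_{ij}\rvert\le\gamma q^{d_G(i,j)}$.

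For this entrywise bound, use polynomial approximation. If $p$ is a polynomial of degree $n$, then $p(A)_{ij}=0$ whenever $d_G(i,j)>n$, since $(A^s)_{ij}\neq0$ requires a walk of length $s$ in $G$. Taking $n=d_G(i,j)-1$ gives
\[
\lvert(A^{-1})_{ij}\rvert=\lvert(A^{-1}-p(A))_{ij}\rvert\le\|A^{-1}-p(A)\|_2=\max_{\sigma\in\mathrm{spec}(A)}\lvert 1/\sigma-p(\sigma)\rvert .
\]
Positive definiteness, which follows from the connected-component assumption, puts the spectrum in $[\sigma_{\min},\sigma_{\max}]$. We then invoke the classical Chebyshev best-approximation bound for $1/x$ on $[a,b]$:
\[
\inf_{\deg p\le n}\|1/x-p\|_{\infty,[a,b]}=\frac{(1+\sqrt{\kappa})^2}{2b}\,q^{\,n+1},
\]
with $\kappa=b/a$ and $q=(\sqrt\kappa-1)/(\sqrt\kappa+1)$. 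Setting $n+1=d_G(i,j)$ yields exactly $\gamma q^{d_G(i,j)}$ with $\gamma=(1+\sqrt\kappa)^2/(2\sigma_{\max})$. The case $d_G(i,j)=0$, i.e.\ $i=j$, is handled directly: $\lvert(A^{-1})_{ii}\rvert\le 1/\sigma_{\min}\le\gamma$, since $(1+\sqrt\kappa)^2\ge 2\kappa$ is false in general but $(1+\sqrt\kappa)^2/(2\sigma_{\max})\ge \kappa/(2\sigma_{\max})\cdot$... To keep this step clean, I would instead cite the Demko--Moss--Smith form, whose constant already covers $n=0$. Alternatively, one can check that $1/\sigma_{\min}\le (1+\sqrt\kappa)^2/(2\sigma_{\max})$ iff $2\kappa\le(1+\sqrt\kappa)^2$, which holds because $\kappa-2\sqrt\kappa-1\le0$ fails for large $\kappa$. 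So the diagonal case should be argued via the $n=0$ Chebyshev bound itself, taking the best constant approximant. The asymptotic $\gamma=O(1/\sigma_{\min})$ follows because $(1+\sqrt\kappa)^2/(2\sigma_{\max})\le 2\kappa/\sigma_{\max}=2/\sigma_{\min}$.

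The main obstacle is getting the exact constant in the Chebyshev approximation of $1/x$ and making sure it is valid uniformly for all $n\ge0$. I would first try to quote the Demko--Moss--Smith lemma directly, with bandwidth replaced by graph distance. Failing that, I would derive the bound from the explicit formula for the error of the best approximation of $1/(x-c)$.
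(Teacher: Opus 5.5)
You follow the paper's route: $\Delta\boldsymbol{\tau}=A^{-1}\Delta\mathbf{b}$ with $\Delta b_{f_k}=2\lambda_f\Delta y_{f_k}$, then an entrywise decay bound on $A^{-1}$, then the triangle inequality. The one difference is that the paper simply cites Demko--Moss--Smith for $\lvert(A^{-1})_{ij}\rvert\le\gamma q^{d_G(i,j)}$, whereas you derive it by polynomial approximation. For $d_G(i,j)\ge 1$ your derivation is complete, and your constant is correct: $\frac{(1+\sqrt{\kappa})^2}{2\sigma_{\max}}q^{n+1}=\frac{\kappa-1}{2\sigma_{\max}}q^{n}$, which is the classical Chebyshev error for $1/x$. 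For $d_G(i,j)=\infty$ the entry vanishes, since $A$ is block diagonal over components.

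The genuine gap is the self-term $f_k=i$, where $d_G=0$; it arises whenever $d_i\in\mathcal{F}$. None of your three attempts closes it.
\begin{itemize}
\item For $d_G=0$ the walk argument forces $p\equiv 0$, which only gives $\lvert(A^{-1})_{ii}\rvert\le\lVert A^{-1}\rVert_2=1/\sigma_{\min}$. The $n=0$ Chebyshev bound you invoke is the $d_G=1$ case.
\item The inequality $1/\sigma_{\min}\le\gamma$ is equivalent to $\kappa\le 3+2\sqrt{2}$. Your sentence ``holds because \ldots\ fails'' is therefore self-contradictory.
\item Demko--Moss--Smith do not cover the diagonal with $\gamma$. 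Their constant is $\max\{1/\sigma_{\min},\gamma\}$ precisely to handle it. The paper's Step 2 quotes the bound with $\gamma$ alone, so it has the same hole.
\end{itemize}

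In fact the hole cannot be filled, because the stated inequality is false for the self-term. Take $\lambda_f=1$, an isolated feedback document $d_0$, and a clique $d_1,\dots,d_n$ of feedback documents joined by supportive edges of weight $1$ (one source group).
\begin{itemize}
\item $A$ is block diagonal with blocks $2$ and $2(n+1)I-2J$, where $J$ is the $n\times n$ all-ones matrix.
\item Its spectrum is $\{2,\,2(n+1)\}$, so $\sigma_{\min}=2$ and $\kappa=n+1$.
\item Perturbing only $y_0$ gives $\Delta\tau_0=\Delta y_0$, since the isolated node has $\tau_0^*=y_0$.
\item The lemma's right-hand side is $2\lambda_f\gamma\lvert\Delta y_0\rvert=\frac{(1+\sqrt{\kappa})^2}{2\kappa}\lvert\Delta y_0\rvert$, which equals $0.605\,\lvert\Delta y_0\rvert$ for $n=99$.
\end{itemize}
Connectivity does not rescue it: attaching $d_0$ to the clique by an edge of small weight keeps the violation, by continuity of $A^{-1}$ and of the spectrum.

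There are two ways to repair the statement:
\begin{itemize}
\item State the lemma only for $d_i\notin\mathcal{F}$. Then every $d_G(i,f_k)\ge 1$ and your Chebyshev argument is already complete.
\item Replace $\gamma$ by $\max\{1/\sigma_{\min},\gamma\}$, which is still $O(1/\sigma_{\min})$. Equivalently, bound the self-term separately by $2\lambda_f\lvert\Delta y_i\rvert/\sigma_{\min}$.
\end{itemize}
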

\emph{Proof.} See Appendix~\ref{proof1}.

This lemma shows that trust propagation is distance-weighted and cumulative.
Each document receives influence from all documents with feedback. Each influence
term is weighted by $q^{d_G(i,f_k)}$, which depends on its graph distance to the corresponding document with feedback.
The result suggests that sparse feedback can still be effective when the documents with feedback are well
positioned in the relation graph, allowing their influence to reach other
documents through short relation paths.

\begin{lemma}[Convergence]
\label{thm:convergence}
With the fixed step size
$\eta^\star = 2/(\sigma_{\min}+\sigma_{\max})$, the projected gradient descent
iterates satisfy
\begin{equation*}
\label{eq:convergence}
\lVert\boldsymbol{\tau}^{(t)} - \boldsymbol{\tau}^*\rVert
\;\leq\; \rho \cdot \lVert\boldsymbol{\tau}^{(t-1)} - \boldsymbol{\tau}^*\rVert,
\qquad \rho = \frac{\kappa-1}{\kappa+1}.
\end{equation*}
\end{lemma}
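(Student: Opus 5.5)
The plan is the standard contraction argument for projected gradient descent on a strongly convex quadratic, with the projection handled by nonexpansiveness. Throughout, $\boldsymbol{\tau}^*$ is taken to be the minimizer of $\mathcal{L}$ over the box $[0,1]^N$. It coincides with the unconstrained solution $A^{-1}\mathbf{b}$ whenever the latter lies in the box, and the argument only uses that $\boldsymbol{\tau}^*$ is a fixed point of the projected gradient map. Since $\mathcal{L}$ is quadratic with Hessian $A$, we have $\nabla\mathcal{L}(\boldsymbol{\tau}) = A\boldsymbol{\tau}-\mathbf{b}$. The PGD update is
\[
\boldsymbol{\tau}^{(t)} = \Pi\bigl(\boldsymbol{\tau}^{(t-1)} - \eta(A\boldsymbol{\tau}^{(t-1)}-\mathbf{b})\bigr),
\]
where $\Pi$ is the Euclidean projection onto the box.

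\textbf{Step 1: fixed-point property.} By the first-order optimality condition for convex minimization over a closed convex set, $\boldsymbol{\tau}^* = \Pi\bigl(\boldsymbol{\tau}^* - \eta(A\boldsymbol{\tau}^*-\mathbf{b})\bigr)$ for every $\eta>0$. In the unconstrained reading $A\boldsymbol{\tau}^*=\mathbf{b}$, this is immediate.

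\textbf{Step 2: remove the projection.} The projection onto a closed convex set is nonexpansive. Subtracting the two fixed-point expressions, the affine terms in $\mathbf{b}$ cancel, and we get
\[
\lVert\boldsymbol{\tau}^{(t)}-\boldsymbol{\tau}^*\rVert \le \lVert (I-\eta A)(\boldsymbol{\tau}^{(t-1)}-\boldsymbol{\tau}^*)\rVert \le \lVert I-\eta A\rVert_2\,\lVert\boldsymbol{\tau}^{(t-1)}-\boldsymbol{\tau}^*\rVert .
\]

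\textbf{Step 3: spectral bound on $I-\eta A$.} Because $A$ is symmetric, and positive definite under the feedback-per-component assumption established earlier (Appendix~\ref{app:proofs}), $I-\eta A$ is symmetric with eigenvalues $1-\eta\sigma$ for $\sigma\in[\sigma_{\min},\sigma_{\max}]$. Hence
\[
\lVert I-\eta A\rVert_2=\max\{|1-\eta\sigma_{\min}|,\,|1-\eta\sigma_{\max}|\}.
\]
Substituting $\eta^\star=2/(\sigma_{\min}+\sigma_{\max})$ gives
\[
1-\eta^\star\sigma_{\min}=\frac{\sigma_{\max}-\sigma_{\min}}{\sigma_{\max}+\sigma_{\min}}, \qquad 1-\eta^\star\sigma_{\max}=-\frac{\sigma_{\max}-\sigma_{\min}}{\sigma_{\max}+\sigma_{\min}} .
\]
Both have absolute value $(\sigma_{\max}-\sigma_{\min})/(\sigma_{\max}+\sigma_{\min})$. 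Dividing numerator and denominator by $\sigma_{\min}$ gives $(\kappa-1)/(\kappa+1)=\rho$. This choice of $\eta^\star$ equalizes the two extremes, which is why it is the optimal fixed step.

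\textbf{Main obstacle.} The only delicate point is Step 1: being precise about which $\boldsymbol{\tau}^*$ the lemma refers to. The paper analyzes the unconstrained solution, but PGD converges to the constrained minimizer. I would use the variational-inequality characterization of the constrained minimizer to get the fixed-point identity. Then the bound holds for the constrained optimum, and it reduces to the stated one whenever $A^{-1}\mathbf{b}\in[0,1]^N$. Positive definiteness, needed for $\rho<1$, is taken from the earlier result.
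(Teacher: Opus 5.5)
Your proof is correct and follows the same route as the paper. You write the pre-projection error as $(I-\eta^\star A)(\boldsymbol{\tau}^{(t-1)}-\boldsymbol{\tau}^*)$, bound $\lVert I-\eta^\star A\rVert_2=\rho$ from the spectrum of $A$, and pass through the projection by nonexpansiveness. Your fixed-point Step~1 makes explicit what the paper's last inequality tacitly uses, namely $P_{[0,1]^N}(\boldsymbol{\tau}^*)=\boldsymbol{\tau}^*$, and the two readings of $\boldsymbol{\tau}^*$ in fact coincide here: a discrete maximum principle on $\mathbf{u}=\boldsymbol{\tau}-\tfrac12\mathbf{1}$ in the optimality system $A\boldsymbol{\tau}^*=\mathbf{b}$ gives $\lvert\tau^*_i-\tfrac12\rvert\le\max_{f\in\mathcal{F}}\lvert y_f-\tfrac12\rvert\le\tfrac12$ for the unconstrained solution.
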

\emph{Proof.}
See Appendix~\ref{proof2}.

At each iteration, the error contracts by a factor of at most $\rho$, so the number of iterations needed to reach an $\epsilon$-accurate
solution (i.e.\ 
$\lVert\boldsymbol{\tau}^{(t)} - \boldsymbol{\tau}^*\rVert \leq \epsilon$) scales as
$O(\kappa\log(1/\epsilon))$. In our experiments, PGD converges
within $1000$ iterations on tested datasets.

\subsection{Trust-aware Rescoring}
\label{sec:reranking}
We select documents with both high trust scores and high query relevance, and provide them to the LLM as retrieved context for answer generation. To this end, we perform trust-aware rescoring for every document by using its estimated trust score and its similarity score. The similarity score is computed by the retriever, e.g., as a dot product score, to measure how relevant the document is to the query. Specifically, for document $d_i$, we combine its trust score $\tau_i$ with its normalized similarity score ${s}_i$ and define the trust-aware ranking score as $
r_i = \alpha \tau_i + (1-\alpha){s}_i$,
where $\alpha \in [0,1]$ controls the trade-off between document trustworthiness 
and its relevance to the query. We select the top-$k$ documents with the largest $r_i$ and provide them to the LLM for answer generation.

\subsection{Trust-aware Answer Generation}
\label{sec:generation}

Apart from using trust-aware rescoring for document retrieval, we further leverage the estimated trust scores during answer generation. Specifically, for the top-$k$ documents with the largest trust-aware ranking scores, we format the prompt so that each document is accompanied by its trust score. We also include an instruction in the prompt to guide the LLM to place greater reliance on documents with higher trust scores during answer generation. This trust-aware prompting complements trust-aware rescoring: while rescoring filters out less reliable documents before answer generation, this trust-aware prompting helps the LLM account for document reliability during answer generation.

\section{Evaluation}
\label{sec:eval}

\subsection{Evaluation Setup} 
\label{sec:impl}

\noindent\textbf{Datasets.} We evaluate \name  with three open-domain QA datasets: MS~MARCO~\citep{bajaj2016msmarco}, Natural Questions (NQ)~\citep{kwiatkowski2019nq}, and TriviaQA~\citep{joshi2017triviaqa}. These datasets are widely adopted in retrieval-augmented and open-domain QA evaluation, and together they cover diverse question types. Following the evaluation settings commonly used in prior RAG reliability and robustness studies \citep{shen2025reliabilityrag, zhong2023poisoning,wei2024instructrag,wang2025astuterag}, we randomly sample 300 query--answer instances for each dataset. Half of the sampled queries are paired with synthetic contradictory documents generated by following the method in PoisonedRAG~\citep{zou2024poisonedrag}. These documents are written to remain fluent and relevant to the query while supporting answers that conflict with the ground-truth answers. The remaining queries are paired only with factual documents from the original datasets. In this way, we construct the evaluation corpus from the relevant documents associated with the sampled queries, resulting in 805, 893, and 1,037 documents for MS MARCO, NQ, and TriviaQA, respectively. This setup evaluates whether the end-to-end system can produce correct answers when the corpus contains both reliable evidence and fluent but misleading contradictory content.

\noindent\textbf{Retrievers.}
We evaluate with both sparse and dense retrievers. We use BM25~\citep{robertson2009bm25}, a sparse lexical retriever, and two dense bi-encoder retrievers: Contriever~\citep{izacard2022contriever} and MiniLM based on \texttt{all-MiniLM-L6-v2}~\citep{reimers2019sentence}. For each query, we retrieve $k{=}5$ documents.

\noindent\textbf{Graph construction.} We construct a document relation graph using two types of information: document sources and semantic relations between documents.
Documents $i$ and $j$ from the same source are connected with supportive edges ($\ell_{ij}=+1$). In TriviaQA, we determine whether two documents share the same source based on their source Uniform Resource Locators (URLs). For MS MARCO and NQ, which do not provide source metadata, we use simulated source grouping: factual documents are partitioned into multiple source groups, and contradictory documents are partitioned into another set of source groups. Documents in the same group are treated as sharing the same source. We adopt this simulation as a proxy for real source metadata, following recent source-reliability-aware RAG work that simulates sources with varying reliability~\citep{hwang2025retrieval}. We also apply an NLI model (\texttt{nli-deberta-v3-small}) to each document pair. If the model predicts that two documents $i$ and $j$ support each other, we add a supportive edge ($\ell_{ij}=+1$). If it predicts that they contradict each other,
we add a contradictory edge ($\ell_{ij}=-1$). Appendix~\ref{app:nli_quality} reports the precision and recall of the
NLI-derived edges on all three datasets. Table~\ref{tab:fb_baseline} evaluates the impact of removing the source-group edges to isolate the contribution of the NLI-derived edges.

\noindent\textbf{Feedback construction.} We construct the feedback set by randomly sampling a fraction of factual and contradictory documents and simulating feedback labels for them.
Sampled factual documents are assigned positive feedback labels, while sampled contradictory documents are assigned negative feedback labels. We refer to this sampling fraction as the feedback ratio (FB); for instance, FB = 30\% means that feedback labels are provided for 30\% of the documents. As real-world feedback may be imperfect, we evaluate \name under noisy feedback in Figure~\ref{fig:feedback_ratio}, where a portion of the feedback labels are incorrect.

\noindent\textbf{Trust optimization.}
For PGD, we use a step size of $\eta = 0.01$, run for at most $T = 1000$ iterations, and stop when the convergence tolerance reaches $\epsilon = 10^{-6}$.
We set feedback loss weight $\lambda_f{=}1.0$.
Trust scores are optimized over all corpus documents, resulting in large-scale optimization problems over 805, 893, and 1,037 documents for MS MARCO, NQ, and TriviaQA, respectively.

\noindent\textbf{Rescoring.}
We set the trust coefficient to $\alpha{=}0.6$. We also test the impact of $\alpha$ in Section~\ref{sec:main_results}. The top-$k$ documents, where $k=5$ in our experiments, are provided as retrieved context to the LLM. We also test the impact of $k$ in Appendix~\ref{app:k_ablation}.

\noindent\textbf{Answer generation.} 
We use proprietary and open-source LLMs, including GPT-4o-mini, GPT-5, Gemini~2.5~Flash, Gemini 2.5 Pro, Llama-3.1-8B-Instruct, and Mistral-7B-Instruct. \name leverages the trust-aware prompt that annotates each document with its trust score (as detailed in Section~\ref{sec:generation}).

\noindent\textbf{Evaluation metrics.} We adopt exact match (EM) and fact precision@$k$ (FP@$k$) as evaluation metrics.
EM is a widely used metric
in open-domain QA.
EM checks whether the normalized generated answer contains the ground-truth answer, with normalization including lowercasing and the removal of articles and punctuation. FP@$k$ evaluates retrieval quality, which measures the reliability of the top-ranked documents used for answer generation. Specifically, let $\mathcal{D}_{\mathrm{fact}}$ and $\mathcal{D}_{\mathrm{contr}}$ denote the sets of factual and contradictory documents, respectively, and let $\mathcal{R}_k(q)$ denote the top-$k$ documents for query $q$. 
FP@$k$ is defined as the fraction of factual documents among these top-$k$ documents:
$
\label{eq:fp}
\text{FP@}k =
\frac{
|\mathcal{R}_k(q) \cap \mathcal{D}_{\mathrm{fact}}|
}{
|\mathcal{R}_k(q) \cap (\mathcal{D}_{\mathrm{fact}} \cup \mathcal{D}_{\mathrm{contr}})|
}.
$

\noindent\textbf{Baselines.}
We compare \name against the following methods:
(1)~\textit{VanillaRAG} ranks documents using the relevance scores produced by the retriever and feeds the top-ranked documents to the LLM. 
(2)~\textit{InstructRAG}~\citep{wei2024instructrag} guides the LLM to better use retrieved evidence during generation. 
It decomposes retrieved documents into atomic claims, and prompts the LLM to generate a rationale connecting relevant claims to the final answer. (3)~\textit{AstuteRAG}~\citep{wang2025astuterag} combines the LLM's internal knowledge with retrieved evidence. 
It first prompts the LLM to produce an answer-relevant document from its internal 
knowledge, and then compares this document with the retrieved documents and resolves conflicts before generating the answer. 
(4)~\textit{ReliabilityRAG}~\citep{shen2025reliabilityrag} uses NLI to identify contradictions among retrieved documents. It then solves a maximum independent set problem to select a subset of documents with no detected pairwise contradictions and uses keyword aggregation to generate the final answer.
(5)~\textit{TrustRAG}~\citep{zhou2025trustrag} includes a first-stage filtering procedure that clusters the retrieved documents by embedding similarity and filters out clusters identified as suspicious. We use this first-stage procedure as the baseline.

\subsection{Evaluation Results}
\label{sec:main_results}

\begin{table}[t]
\centering
\small
\setlength{\tabcolsep}{3pt}
\begin{tabular}{@{}llccc@{}}
\toprule
Dataset & Method
  & Contriever & BM25 & MiniLM \\
\midrule
\multirow{5}{*}{
{\small MS MARCO}}
& VanillaRAG        & 0.27 & 0.24 & 0.26 \\
& InstructRAG       & 0.35 & \underline{0.40} & 0.37 \\
& AstuteRAG         & 0.38 & 0.39 & 0.37 \\
& ReliabilityRAG    & \underline{0.42} & \underline{0.40} & \underline{0.39} \\
& TrustRAG\textsubscript{stage 1}   & 0.36 & 0.35 & 0.33 \\
\cmidrule(l){2-5}
& \name             & \textbf{0.45} & \textbf{0.47} & \textbf{0.44} \\
\midrule
\multirow{5}{*}{
{\small NQ}}
& VanillaRAG        & 0.37 & 0.37 & 0.39 \\
& InstructRAG       & \underline{0.46} & 0.48 & 0.43 \\
& AstuteRAG         & 0.44 & \underline{0.50} & 0.42 \\
& ReliabilityRAG    & 0.45 & \underline{0.50} & \underline{0.48} \\
& TrustRAG\textsubscript{stage 1}   & 0.44 & 0.46 & 0.41 \\
\cmidrule(l){2-5}
& \name             & \textbf{0.55} & \textbf{0.59} & \textbf{0.59} \\
\midrule
\multirow{5}{*}{
{\small TriviaQA}}
& VanillaRAG          & 0.57 & 0.62 & 0.49 \\
& InstructRAG       & 0.68 & 0.67 & 0.69 \\
& AstuteRAG         & 0.70 & 0.71 & \underline{0.72} \\
& ReliabilityRAG    & \underline{0.75} & \underline{0.72} & 0.67 \\
& TrustRAG\textsubscript{stage 1}   & 0.66 & 0.65 & 0.63 \\
\cmidrule(l){2-5}
& \name             & \textbf{0.78} & \textbf{0.75} & \textbf{0.77} \\
\bottomrule
\end{tabular}
\caption{EM averaged over GPT-4o-mini, Gemini 2.5 Flash, and GPT-5
($k{=}5$, 
FB=30\%).
Best results are \textbf{bolded}; second-best results are \underline{underlined}.}
\label{tab:main_em}
\end{table}

\begin{table}[t]
\centering
\small
\setlength{\tabcolsep}{4pt}
\begin{tabular}{@{}llcc@{}}
\toprule
Dataset & Retriever
  & Baselines & \name \\
\midrule
\multirow{3}{*}{MS~MARCO}
& Contriever & 44\% & \textbf{77\%} \\
& BM25       & 57\% & \textbf{92\%} \\
& MiniLM     & 54\% & \textbf{88\%} \\
\midrule
\multirow{3}{*}{NQ}
& Contriever & 53\% & \textbf{79\%} \\
& BM25       & 46\% & \textbf{88\%} \\
& MiniLM     & 48\% & \textbf{96\%} \\
\midrule
\multirow{3}{*}{TriviaQA}
& Contriever & 38\% & \textbf{77\%} \\
& BM25       & 57\% & \textbf{90\%} \\
& MiniLM     & 49\% & \textbf{92\%} \\
\bottomrule
\end{tabular}
\caption{Retrieval quality measured by FP@5 ($k{=}5$, FB=30\%). }

\label{tab:fp5}
\end{table}

\noindent\textbf{\name outperforms baselines on EM and FP@5.} Tables~\ref{tab:main_em} and~\ref{tab:fp5} 
report the EM and FP@5 averaged over GPT-4o-mini, Gemini~2.5~Flash, and GPT-5. 
Table~\ref{tab:main_em} shows that \name achieves the best EM across all datasets and retrievers, with absolute EM gains of 0.03–0.11 compared with the strongest baseline.
Table~\ref{tab:fp5} shows FP@5 (the fraction of factual documents among top-$k$ documents) of \name and baselines. All baselines have the same FP@5 as VanillaRAG under this retrieval-level metric: InstructRAG, AstuteRAG, and TrustRAG$_{\text{stage 1}}$ operate after the retriever selects the top-\(k\) documents based on query relevance and thus do not change the ranked top-$k$ documents; 
although ReliabilityRAG filters documents before final answer generation, its filtering is driven by intermediate LLM-generated answers and their consistency, rather than by producing a new 
top-$k$ list. The baseline methods have an FP@5 of 38–57\%, while TrustPropRAG raises it to 77–96\%. The improvement is observed with all three retrievers, indicating that the method is not tied to a specific retriever.
These improvements in FP@5 help explain the EM gains of \name. Specifically, a higher FP@5 indicates that the top-$k$ list contains a larger fraction of factual documents, which suggests that trust score optimization assigns higher trust to more reliable documents, and trust-aware rescoring promotes them into the top-$k$ list. As a result, the LLM inputs contain more documents that support the correct answer.

\noindent\textbf{Ablation study.}
We conduct an ablation study to examine the contribution of each component in \name. Table~\ref{tab:component} evaluates five variants on MS~MARCO with FB=30\% using Gemini~2.5~Flash. We observe that the combination of trust score optimization and trust-aware rescoring leads to a substantial improvement. Compared with VanillaRAG, adding these two components increases FP@5 from 54\% to 87\% and improves EM from 0.28 to 0.41.
This suggests that the optimized trust scores are effective for identifying more trustworthy documents and that rescoring can promote them into the top-$k$ context used for answer generation. We also observe that trust-aware prompting provides an additional benefit when combined with trust score optimization and trust-aware rescoring, increasing EM from 0.41 to 0.46.
This suggests that some low-trust documents may still remain in the top-$k$ list after rescoring, and trust-aware prompting helps the LLM reduce their influence during answer generation. 
Finally, we observe that using trust-aware prompting alone does not improve performance. Without trust score optimization, FP@5 remains at 54\%, and EM slightly decreases from 0.28 to 0.27 compared with VanillaRAG. This suggests the trust-aware prompt becomes useful only when the trust scores have been optimized and can provide a meaningful signal for answer generation.

\begin{table}[t]
\centering
\small
\setlength{\tabcolsep}{2pt}
\begin{tabular}{@{}lccc| cc@{}}
\toprule
Variant &Opt. & Rescore & Prompt & EM & FP@5 \\
\midrule
VanillaRAG & -- & -- & -- & 0.28 & 54\% \\
Opt.+Prompt & \checkmark & -- & \checkmark & 0.35 & 54\% \\
Opt.+Rescore & \checkmark & \checkmark & -- & 0.41 & 87\% \\
Prompt only & -- & -- & \checkmark & 0.27 & 54\% \\
Full \name & \checkmark & \checkmark & \checkmark & 0.46 & 88\% \\
\bottomrule
\end{tabular}
\caption{Effects of different TrustPropRAG components.
\emph{Opt}: trust score optimization, \emph{Rescore}: trust-aware rescoring, and \emph{Prompt}: trust-aware prompting.}
\vspace{-0.3cm}
\label{tab:component}
\end{table}

\begin{table}[t]
\centering
\small
\setlength{\tabcolsep}{4pt}
\begin{tabular}{@{}lcc@{}}
\toprule
Variant & EM & FP@5 \\
\midrule
VanillaRAG & 0.31 & 51\% \\
\name (trivial feedback) & 0.36 & 62\% \\
\name (NLI-only) & 0.44 & 79\% \\
\name & 0.49 & 92\% \\
\bottomrule
\end{tabular}
\caption{Effects of trust propagation and source-group edges (MS~MARCO and NQ, MiniLM, GPT-4o-mini, $k{=}5$, FB=30\%).}
\label{tab:fb_baseline}
\end{table}

\begin{table*}[t]
\centering
\small
\setlength{\tabcolsep}{3pt}
\begin{tabular}{@{}l cc cc cc c@{}}
\toprule 
\multirow{3}{*}{
{\small Model}} & \multicolumn{2}{c}{MS~MARCO}
& \multicolumn{2}{c}{NQ}
& \multicolumn{2}{c}{TriviaQA}
& \multirow{3}{*}{
{\small $\overline{\Delta}$}}\\
\cmidrule(lr){2-3} \cmidrule(lr){4-5} \cmidrule(lr){6-7}
 & VanillaRAG & \name & VanillaRAG & \name & VanillaRAG & \name &  \\
\midrule
\multicolumn{8}{@{}l}{\emph{Proprietary}} \\
GPT-4o-mini       & 0.21 & 0.40 & 0.41 & 0.58 & 0.50 & 0.76 & +0.21 \\
GPT-5             & 0.30 & 0.49 & 0.43 & 0.62 & 0.53 & 0.85 & +0.23 \\
Gemini~2.5~Flash  & 0.28 & 0.46 & 0.33 & 0.56 & 0.44 & 0.69 & +0.22 \\
Gemini~2.5~Pro    & 0.32 & 0.48 & 0.37 & 0.60 & 0.46 & 0.77 & +0.23 \\
\midrule
\multicolumn{8}{@{}l}{\emph{Open-source}} \\
Llama-3.1-8B-Instruct & 0.19 & 0.32 & 0.38 & 0.55 & 0.48 & 0.77 & +0.20 \\
Mistral-7B-Instruct   & 0.22 & 0.35 & 0.33 & 0.47 & 0.51 & 0.74 & +0.17 \\
\bottomrule
\end{tabular}
\caption{Performance with different LLMs across three datasets.
$\overline{\Delta}$ denotes \name's absolute EM improvement over VanillaRAG, averaged across the three datasets.}
\label{tab:model}
\end{table*}

\begin{figure}[t]
  \centering
  \includegraphics[width=0.72\columnwidth]{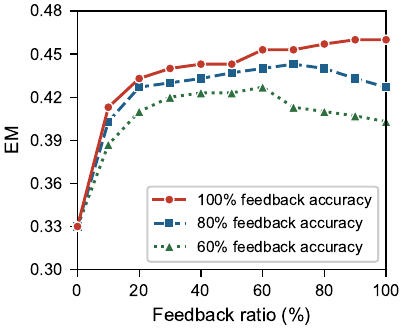}
  \vspace{-0.3cm}
  \caption{Impact of feedback ratio and feedback accuracy on EM (MS~MARCO, MiniLM, Gemini~2.5~Flash).}
  \label{fig:feedback_ratio}
\end{figure}

\noindent\textbf{The effects of trust propagation and source-group edges.} To separate the benefit of trust propagation from simply having access to feedback, the \emph{trivial feedback} variant uses the same feedback set as \name but does not propagate the feedback through the graph. Documents with positive feedback receive a boost to their retrieval scores, while documents with negative feedback are not included in the top-$k$ list. To isolate the contribution of the NLI-derived edges from source-group edges, the \emph{NLI-only} variant removes all simulated source-group edges and retains only the NLI-derived edges.

From Table~\ref{tab:fb_baseline}, the trivial feedback variant achieves an EM of 0.36 and an FP@5 of 62\%. This confirms that directly using feedback is beneficial. However, its performance remains substantially below TrustPropRAG, which achieves an EM of 0.49 and an FP@5 of 92\%, respectively. 
 This gap shows that the gain of \name cannot be attributed only to access to the feedback set. Propagating feedback through document relation graphs provides trust estimates for documents without direct feedback.  Table~\ref{tab:fb_baseline} also shows that the
NLI-only variant achieves an EM of 0.44 and an FP@5 of 79\%. Although removing the source-group edges leads to a performance drop, the NLI-only variant still outperforms the trivial feedback variant, showing that propagation remains beneficial even when the graph contains only NLI-derived relations.

\noindent\textbf{Performance under noisy feedback.}
Figure~\ref{fig:feedback_ratio} evaluates the robustness of TrustPropRAG by varying the accuracy of feedback labels from 60\% to 100\%.
Feedback accuracy denotes the fraction of feedback labels that are correct. For example, 80\% feedback accuracy means that  20\% of the feedback labels are flipped: a factual document may be labeled as low-trust, or a contradictory document may be labeled as high-trust.
Figure~\ref{fig:feedback_ratio} reports EM on MS~MARCO with Gemini~2.5~Flash. We observe that \name is robust to noisy feedback.
Even when feedback accuracy is only 60\% (i.e., 40\% of feedback labels are incorrect),
\name still improves EM over the setting without feedback across all feedback ratios. This is because individual feedback errors can be mitigated through trust propagation over the document relation graph, where relations among documents help mitigate the impact of noisy feedback signals.

\begin{figure}[t]

  \centering
  \includegraphics[width=0.82\columnwidth]{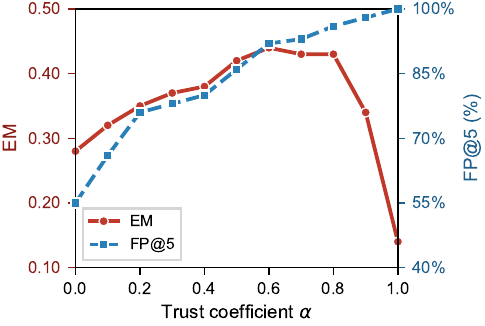}
              \vspace{-0.3cm}
  \caption{Impact of trust coefficient $\alpha$ on EM and FP@5.
           (MS~MARCO, MiniLM, Gemini~2.5~Flash, FB=30\%)}
  \label{fig:alpha}

\end{figure}

\noindent\textbf{The impact of feedback ratio.}
Figure~\ref{fig:feedback_ratio} also shows EM as a function of the feedback ratio on MS~MARCO with Gemini~2.5~Flash.
Compared with the setting without feedback, \name improves sharply when only 10\% of documents receive feedback, after which the improvement becomes more gradual. This indicates that a small portion of feedback is sufficient for trust propagation to produce reliable trust estimates across the document relation graph.

\noindent\textbf{The impact of trust coefficient $\alpha$.}
Figure~\ref{fig:alpha} shows EM and FP@5 as the trust coefficient
$\alpha$ varies from 0 to 1. When $\alpha=0$, rescoring uses only the retrieval relevance score.
As $\alpha$ increases from $0$, both EM and FP@5 improve, showing that incorporating trust scores helps promote documents that support the correct answer into the top-$k$ context.
EM reaches its best value at $\alpha=0.6$, and remains stable between $0.5$ and $0.8$.
However, when $\alpha$ is too large, the top-$k$ list tends to include documents with high trust scores but lower query relevance, leading to a slight decrease in EM.
As \name is not sensitive to the exact choice of $\alpha$ within a range (e.g., between 0.5 and 0.8), we set $\alpha{=}0.6$ for the rest of the experiments.

\noindent\textbf{Performance with different LLMs.}
We evaluate the performance of \name with a diverse set of 
proprietary and open-source 
LLMs of different model scales in Table~\ref{tab:model}.
Since FP@5 depends only on retrieval and rescoring, it is unchanged across LLMs; therefore, we report EM as the main metric.
We use MiniLM as the retriever and set $k=5$ and FB=30\%. Table~\ref{tab:model} shows that \name consistently outperforms VanillaRAG across all evaluated LLMs. The average EM improvement ranges from 0.17 to 0.23 across both proprietary models (GPT-4o-mini, GPT-5, Gemini~2.5~Flash, Gemini~2.5~Pro) and smaller open-source models (Llama-3.1-8B-Instruct, Mistral-7B-Instruct).
Notably, \name still achieves an average EM gain of 0.23 with strong LLMs such as GPT-5 and Gemini~2.5~Pro. This suggests that improving the reliability of the retrieved context with trust propagation remains useful even for strong LLMs.

\noindent\textbf{Computational cost.}
Table~\ref{tab:cost} reports the computational cost of TrustPropRAG, including offline graph construction and trust optimization, as well as query-time retrieval. The computational resources are detailed in Appendix~\ref{resource}.
Constructing the graph accounts for most of the offline cost, taking 578.3--1077.1 seconds. Given the constructed graph, trust score optimization is efficient, requiring only 0.3--0.4 seconds, indicating that trust propagation itself introduces negligible computational overhead. 
Both graph construction and trust optimization are performed offline and their costs are therefore amortized across future queries. 
At query time, retrieval takes 4.4--4.8 seconds and is shared by TrustPropRAG and the baselines. 
Beyond retrieval, TrustPropRAG only performs lightweight trust-aware rescoring using the precomputed trust scores, introducing limited additional query-time overhead.

\begin{table}[t]
\centering
\small
\setlength{\tabcolsep}{3pt}
\begin{tabular}{@{}lccc@{}}
\toprule
Stage & MS~MARCO & NQ & TriviaQA \\
\midrule
\multicolumn{4}{@{}l}{\emph{Offline}} \\
Graph construction & 578.3 s & 852.4 s & 1077.1 s \\
Trust optimization (PGD) & 0.3 s & 0.3 s & 0.4 s \\
\midrule
\multicolumn{4}{@{}l}{\emph{Query time}} \\
Retrieval & 4.4 s & 4.8 s & 4.8 s \\
\bottomrule
\end{tabular}
\caption{Computational cost breakdown.}
\label{tab:cost}
\end{table}

\section{Conclusions}
\label{sec:conclusions}

We present TrustPropRAG, a framework that improves RAG reliability by propagating sparse human feedback over a document relation graph. TrustPropRAG estimates a trust score for each document by optimizing an objective that combines pairwise consistency loss and feedback loss. The resulting trust scores are used to select more reliable documents and guide trust-aware answer generation. TrustPropRAG achieves absolute EM gains of 0.03–0.11 over the strongest baseline, and demonstrates robustness to noisy feedback.

\section*{Acknowledgments}
We thank the anonymous reviewers 
for insightful feedback. This work was supported by Seed Grant of IST, and the National Science Foundation under Grants 2550742, 2623125, 2555329, and 2549266.

\section*{Limitations}

\name incorporates human feedback as binary labels, marking documents as either reliable or unreliable. In practice, feedback can be graded rather than binary. Studying \name under graded feedback is left to future work. Following common practice in RAG evaluation, we focus on unreliable content that is injected as synthetic contradictions rather than drawn from naturally occurring conflicts. 
Finally, we focus on open-domain QA with short factual answers. Applying trust propagation to long-form or multi-hop generation, where reliability interacts with compositional reasoning, is left to future work.

\section*{Ethical Considerations}

This work addresses the problem of unreliable content in retrieval-augmented generation. Our experiments use synthetically generated contradictions.
All datasets and models used in this work are publicly released and licensed for research use, and our usage is consistent with their intended research purposes. MS MARCO, NQ, and TriviaQA are publicly available open-domain QA benchmarks released for research. The retrievers (BM25, Contriever, and the all-MiniLM-L6-v2 encoder) and the NLI model (nli-deberta-v3-small) are also publicly released.

\bibliography{ref}

\appendix
\section{Implementation Details}
\label{resource}
\noindent \textbf{Models and computational resources}. Among the LLMs we evaluate, Llama-3.1-8B-Instruct and Mistral-7B-Instruct are open-source models with 8B and 7B parameters, respectively; GPT-4o-mini, GPT-5, Gemini 2.5 Flash, and Gemini 2.5 Pro are proprietary models accessed through their respective APIs. The NLI model (nli-deberta-v3-small) has approximately 44M parameters, and the dense retrievers (Contriever and all-MiniLM-L6-v2) are sentence-encoder models with approximately 110M and 22M parameters respectively. All local computation, including NLI-based edge construction, dense retrieval, and trust score optimization, was run on an NVIDIA RTX 5070 GPU. For our largest corpus (1,037 documents, TriviaQA), this took approximately 0.3 GPU-hours. Trust score optimization via projected gradient descent is lightweight, converging within 1,000 iterations and taking under 1 minute per corpus. We implement retrieval, NLI inference, and trust score optimization using standard Python libraries, and provide exact package versions and scripts in the released code.

\noindent \textbf{Hyperparameters.}
We use the same hyperparameters across all datasets, retrievers, and LLMs. For trust score initialization, we set $\tau_{\min}=0.3$ and $\tau_{\max}=0.7$.
We set feedback loss weight $\lambda_f{=}1.0$. We set the
feedback values $\tau^{+}=0.9$ and $\tau^{-}=0.1$.

\section{Number of Retrieved Documents $k$}
\label{app:k_ablation}

The number of retrieved documents $k$ controls the size of the context provided to the LLM. A smaller $k$ yields a more selective context but may omit relevant evidence, while a larger $k$ provides broader coverage at the cost of allowing more unreliable documents into the prompt. We evaluate TrustPropRAG across $k \in \{1, 3, 5, 10, 20\}$ on MS MARCO with MiniLM as the retriever, Gemini 2.5 Flash as the backbone LLM, and FB$=$30\%. All other hyperparameters follow Section~\ref{sec:eval}.

\begin{table}[t]
\centering
\small
\begin{tabular}{lccccc}
\toprule
& $k$=1 & $k$=3 & $k$=5 & $k$=10 & $k$=20 \\
\midrule
EM & 0.41 & 0.43 & 0.46 & 0.45 & 0.42 \\
FP@$k$ & 95\% & 92\% & 88\% & 84\% & 73\% \\
\bottomrule
\end{tabular}
\caption{Impact of the number of retrieved documents $k$ on EM and FP@$k$ (MS MARCO, MiniLM, Gemini 2.5 Flash, FB$=$30\%).}
\label{tab:k_ablation}
\end{table}

FP@$k$ falls steadily from 95\% ($k$=1) to 73\% ($k$=20) as lower-trust documents enter the top-$k$ selected documents. EM is less sensitive, staying within 0.41--0.46 and peaking at $k$=5 and $k$=10. Trust-aware rescoring places reliable evidence at the top, so small $k$ already provides sufficient factual support; at larger $k$, the trust-aware prompt helps the LLM put less weight to lower-trust documents that are provided in the context. EM is smaller at $k$=1, where the retrieved context may provide insufficient context, and at $k$=20, where the larger retrieval set exposes the LLM to more contradictory context.

Considering that EM peaks at $k$=5 and $k$=10, we adopt $k$=5, matching prior work~\citep{zou2024poisonedrag, shen2025reliabilityrag}.

\section{Proofs}
\label{app:proofs}

We provide proofs of the two lemmas stated in Section~\ref{sec:trust_opt}.
Both lemmas rely on the fact that the trust optimization objective
$\mathcal{L} = \mathcal{L}_{\mathrm{pair}} + \lambda_f \mathcal{L}_{\mathrm{fb}}$
is a quadratic in $\boldsymbol{\tau}$, so the first-order optimality condition for the unconstrained quadratic problem is given by the linear system $A\boldsymbol{\tau}^* = \mathbf{b},$
where $A = \nabla^2 \mathcal{L}$ is the Hessian.
We first verify that $A$ is positive definite under the assumption that every connected component of $G$ contains at least one document with feedback, then prove the two lemmas.

\paragraph{Positive definiteness of $A$.} Let $\mathbf{e}_i \in \mathbb{R}^N$ denote the $i$-th standard basis vector, whose $i$-th entry is one and all other entries are zero.
Expanding the Hessian gives
\begin{align}
\label{eq:A}
A &= 2\!\!\sum_{\substack{e_{ij}\in E\\ \ell_{ij}=+1}}\!\! w_{ij}\,(\mathbf{e}_i - \mathbf{e}_j)(\mathbf{e}_i - \mathbf{e}_j)^{\top} \nonumber\\
  &\quad + 2\!\!\sum_{\substack{e_{ij}\in E\\ \ell_{ij}=-1}}\!\! w_{ij}\,(\mathbf{e}_i + \mathbf{e}_j)(\mathbf{e}_i + \mathbf{e}_j)^{\top} \nonumber\\
  &\quad + 2\lambda_f\!\sum_{i\in \mathcal{F}} \mathbf{e}_i \mathbf{e}_i^{\top},
\end{align}
which is a sum of positive semidefinite matrices, hence $A \succeq 0$. Next, we prove $A \succ 0$.
A vector $\mathbf{v}$ lies in $\ker(A)$ iff
(a) $v_i = v_j$ on every supportive edge ($\ell_{ij}=+1$),
(b) $v_i = -v_j$ on every contradictory edge ($\ell_{ij}=-1$), and
(c) $v_i = 0$ on every node with feedback.
If a node with value zero is connected to
another node by a supportive edge, condition (a) implies that the neighboring
node also has value zero. If it is connected by a contradictory edge, condition
(b) again implies that the neighboring node has value zero.
Under our assumption that every connected component contains at least one
feedback node, we have $\mathbf{v}=\mathbf{0}$ globally, so $A \succ 0$.
We write its eigenvalues as $\sigma_{\min} \le \cdots \le \sigma_{\max}$ and its condition number as
$\kappa = \sigma_{\max}/\sigma_{\min}$.

\subsection{Proof of Lemma~\ref{thm:localization} (Trust Propagation)}
\label{proof1}

We fix the set of documents with feedback $\mathcal{F} = \{f_1,\ldots,f_m\}$ and consider
perturbations of their feedback values. Let $\Delta y_{f_k}$ denote the perturbation applied to the feedback value
$y_{f_k}$. These perturbations leave $A$ unchanged and only modify
the right-hand side $\mathbf{b}$ at entries corresponding to documents in
$\mathcal{F}$. Let $\Delta\mathbf{b}$ denote the resulting
change in $\mathbf{b}$, and let $\Delta b_{f_k}$ denote its entry corresponding
to feedback document $f_k$. We have
 \begin{equation}
 \label{eq:b_f_k}
 \Delta b_{f_k} = 2 \lambda_f\Delta y_{f_k}.
 \end{equation}

\paragraph{Step 1: Linearity and superposition.} Because the perturbations only change the feedback target values, the matrix
$A$ remains fixed. Therefore, the unconstrained optimum satisfies
$
\boldsymbol{\tau}^* = A^{-1}\mathbf{b}.
$
After the perturbation, the right-hand side becomes
$\mathbf{b}+\Delta\mathbf{b}$, and the corresponding change in the optimized
trust scores is
\[
\Delta\boldsymbol{\tau}^{*}
=
A^{-1}\Delta\mathbf{b}.
\]
Since $\Delta\mathbf{b}$ is nonzero only at documents with feedback, the change in
the trust score of document $d_i$ can be written as
\begin{equation}
\label{eq:delta_t_i}
\Delta\tau_i
=
\sum_{k=1}^{m}(A^{-1})_{i f_k}\Delta b_{f_k},
\end{equation}
where $(A^{-1})_{if_k}$ denotes the entry in the $i$-th row and $f_k$-th column of
$A^{-1}$.

\paragraph{Step 2: Entry-wise decay of $A^{-1}$.}
The feedback term contributes
only to the diagonal entries of $A$. The off-diagonal entry $A_{ij}$ ($i\neq j$)
is nonzero when documents $i$ and $j$ are joined by an edge ($\ell_{ij}=\pm1$). Specifically, from Eq.~\eqref{eq:A}, each pairwise relation term adds $\pm 2w_{ij}$ at positions $(i,j)$ and $(j,i)$ of $A$. The sparsity
pattern of $A$ therefore coincides with the adjacency structure of
$G$. 
For a symmetric positive
definite matrix with this property, the Demko--Moss--Smith decay
bound~\citep{demko1984decay,benzi2007decay} gives an entry-wise
decay of $A^{-1}$ that is exponential in graph distance:
\begin{equation}
\label{eq:a_ij}
\lvert (A^{-1})_{ij}\rvert \;\le\; \gamma q^{\,d_G(i,j)},
\qquad
q = \frac{\sqrt{\kappa}-1}{\sqrt{\kappa}+1},
\end{equation}
where $\gamma=(1+\sqrt{\kappa})^2/(2\sigma_{\max})=O(1/\sigma_{\min})$.

\paragraph{Step 3: Superposition bound.} Combining~Eqs.~\eqref{eq:b_f_k}-\eqref{eq:a_ij} and 
using the triangle inequality, we obtain
\begin{align*}
\lvert\Delta\tau_i\rvert
&\le \sum_{k=1}^{m} \left|(A^{-1})_{i f_k}\right| \cdot \left|\Delta b_{f_k}\right| \\
&\le \sum_{k=1}^{m}2\gamma\lambda_f\lvert\Delta y_{f_k}\rvert  q^{d_G(i,f_k)}.
\end{align*}

\subsection{Proof of Lemma~\ref{thm:convergence} (Convergence)}
\label{proof2}
Let
$
\widetilde{\boldsymbol{\tau}}^{(t)}
=
\boldsymbol{\tau}^{(t-1)}
-
\eta\nabla\mathcal{L}\!\left(\boldsymbol{\tau}^{(t-1)}\right)
$
denote the gradient descent step before projection, and let
$
\boldsymbol{\tau}^{(t)}
=
P_{[0,1]^N}\!\left(\widetilde{\boldsymbol{\tau}}^{(t)}\right)
$
denote the projected iterate. 
Since $\mathcal{L}$ is quadratic with Hessian $A$, for any
$\boldsymbol{\tau}$ we have $\nabla\mathcal{L}(\boldsymbol{\tau}) = A(\boldsymbol{\tau} - \boldsymbol{\tau}^*)$. Therefore,
\begin{equation}
\label{eq:iteration}
\begin{aligned}
\widetilde{\boldsymbol{\tau}}^{(t)}-\boldsymbol{\tau}^*
&=
\boldsymbol{\tau}^{(t-1)}
-\eta\nabla\mathcal{L}\!\left(\boldsymbol{\tau}^{(t-1)}\right)
-\boldsymbol{\tau}^* \\
&=
\boldsymbol{\tau}^{(t-1)}-\boldsymbol{\tau}^*
-\eta A\left(\boldsymbol{\tau}^{(t-1)}-\boldsymbol{\tau}^*\right) \\
&=
(I-\eta A)\left(\boldsymbol{\tau}^{(t-1)}-\boldsymbol{\tau}^*\right).
\end{aligned}
\end{equation}
With the fixed step size $\eta^\star = 2/(\sigma_{\min} + \sigma_{\max})$,
each eigenvalue $\sigma$ of $A$ is mapped to an eigenvalue
$1-\eta^\star\sigma$ of $I-\eta^\star A$. Since
$\sigma\in[\sigma_{\min},\sigma_{\max}]$, these eigenvalues lie in
$
\left[
-\frac{\kappa-1}{\kappa+1},
\frac{\kappa-1}{\kappa+1}
\right].
$
Thus, the largest absolute eigenvalue of $I-\eta^\star A$ is
$
\rho=\frac{\kappa-1}{\kappa+1}.
$
Because $I-\eta^\star A$ is symmetric, its spectral norm is equal to its
largest absolute eigenvalue, so
\[
\left\|I-\eta^\star A\right\|_2=\rho.
\]
Taking norms in \eqref{eq:iteration} then gives
\[
\left\|\widetilde{\boldsymbol{\tau}}^{(t)}-\boldsymbol{\tau}^*\right\|
\le
\rho
\left\|\boldsymbol{\tau}^{(t-1)}-\boldsymbol{\tau}^*\right\|.
\]

The projection operator $P_{[0,1]^N}(\cdot)$ is non-expansive, and hence
\begin{align*}
\lVert\boldsymbol{\tau}^{(t)} - \boldsymbol{\tau}^*\rVert
&\le \lVert\widetilde{\boldsymbol{\tau}}^{(t)} - \boldsymbol{\tau}^*\rVert \nonumber\le \rho\,\lVert\boldsymbol{\tau}^{(t-1)} - \boldsymbol{\tau}^*\rVert.
\end{align*}

\section{Impact of Mislabeled Edges}

A mislabeled edge between documents $i$ and $j$ changes the Hessian $A$ in Eq.~\eqref{eq:A} only at the entries involving $i$ and $j$. Under the same assumptions used in Lemma~\ref{thm:localization}, the entry-wise decay bound on $A^{-1}$ from~\citet{demko1984decay} implies that the resulting perturbation to the optimized trust scores decreases exponentially with the shortest-path distance from the mislabeled edge. Therefore, the influence of an edge error diminishes rapidly as it propagates to more distant documents. 

\section{Prompt Templates}
\label{app:prompts}

\paragraph{Trust-aware prompt.}
Each retrieved document is annotated with both its trust score $\tau_i$ and a trust label. We assign the label based on the trust score: high if $\tau_i \geq 0.7$, low if $\tau_i \leq 0.3$, and medium otherwise.
The system instruction is:

\begin{promptbox}
You are a QA system. Each document has a trust score.
Strongly prefer HIGH-trust documents. Ignore or discount LOW-trust
documents, they may contain deliberate misinformation.
Give a short, direct answer.
\end{promptbox}

\noindent Documents are formatted as:

\begin{promptbox}

[Document 1] (Trust: HIGH, score=0.92): <text>

[Document 2] (Trust: LOW, score=0.15): <text>

...
\end{promptbox}

\noindent\textbf{Baseline prompt.}
VanillaRAG uses the following prompt. The other baselines use the prompts specified in their original papers.

\begin{promptbox}
Answer the following question based on the provided documents.
Give a short, direct answer.

Documents:
[Document 1]: <text>
...

Question: <query>

Short answer:
\end{promptbox}

\section{Quality of NLI-derived Edges}
\label{app:nli_quality}

We evaluate the NLI-derived edges against the known factual and contradictory structure of the corpus on the three datasets. We treat an edge between a factual document and a contradictory document as a ground-truth contradictory relation, and an edge between two documents of the same category as a supportive relation. Table~\ref{tab:nli_quality} reports the precision and recall of the contradictory and supportive edges produced by the NLI model.

\begin{table}[t]
\centering
\small
\setlength{\tabcolsep}{4pt}
\begin{tabular}{@{}lcccc@{}}
\toprule
\multirow{2}{*}{Dataset}
 & \multicolumn{2}{c}{Contradictory}
 & \multicolumn{2}{c}{Supportive} \\
\cmidrule(lr){2-3} \cmidrule(lr){4-5}
 & Precision & Recall & Precision & Recall \\
\midrule
MS~MARCO & 0.65 & 0.18 & 0.71 & 0.14 \\
NQ       & 0.69 & 0.21 & 0.68 & 0.17 \\
TriviaQA & 0.58 & 0.13 & 0.62 & 0.12 \\
\bottomrule
\end{tabular}
\caption{Precision and recall of the NLI-derived contradictory and supportive edges.}
\label{tab:nli_quality}
\end{table}

The NLI-derived edges have moderate precision and low recall, as the NLI model predicts most document pairs as neutral. The low recall indicates that many relations are omitted, resulting in a sparse graph, which reflects practical settings where only a limited subset of inter-document relations can be identified. The imperfect precision indicates that the graph also contains some incorrectly labeled edges. Nevertheless, the NLI-only ablation in Table~\ref{tab:fb_baseline} shows that \name still outperforms the baselines when relying only on these imperfect edges, which suggests that trust propagation is robust to noise and sparsity in graph construction.

\section{Performance on QACC}

We evaluate \name on naturally occurring conflicts without any synthetic injection. We used QACC~\cite{liu2025open}, a human-annotated open-domain QA dataset in which unambiguous questions are paired with real web contexts retrieved via Google Search. The conflicts among these contexts arise naturally on the web rather than from injected documents. We randomly sampled 100 questions from QACC and constructed the evaluation corpus from their retrieved contexts. As shown in Table~\ref{tab:qacc}, TrustPropRAG achieves the best EM (0.59) on this naturally conflicting corpus, outperforming the strongest baseline ReliabilityRAG (0.56) and improving over VanillaRAG by 0.11. These results indicate that the effectiveness of TrustPropRAG is not limited to synthetically injected contradictions and extends to naturally occurring conflicts.

\begin{table}[t]
\centering
\small
\setlength{\tabcolsep}{4pt}
\begin{tabular}{@{}lc@{}}
\toprule
Method & EM \\
\midrule
VanillaRAG & 0.48 \\
InstructRAG & 0.54 \\
AstuteRAG & 0.50 \\
ReliabilityRAG & 0.56 \\
TrustRAG\textsubscript{stage 1} & 0.50 \\
\name & \textbf{0.59} \\
\bottomrule
\end{tabular}
\caption{EM on QACC (GPT-4o-mini, MiniLM, $k{=}5$, FB=30\%).}
\label{tab:qacc}
\end{table}

\end{document}